\newcommand*{\eg}{e.g. \@\xspace}
\algnewcommand\INPUT{\item[\textbf{Input:}]}%
\algnewcommand\OUTPUT{\item[\textbf{Output:}]}%
      \newtheorem{assumption}{Assumption}
\newtheorem{definition}{Definition}
\newtheorem{theorem}{Theorem}
\begin{document}

\begin{frontmatter}

\title{Semantic Clustering based Deduction Learning for Image Recognition and Classification}

\author{Wenchi Ma\fnref{myfootnote1}}
\author{Xuemin Tu\fnref{myfootnote2}}
\author{Bo Luo\fnref{myfootnote1}}
\author{Guanghui Wang\fnref{myfootnote3}}
\fntext[myfootnote1]{W. Ma and B. Luo are with the Department of Electrical Engineering and Computer Science, The University of Kansas, Lawrence, KS, 66045 USA e-mail: wenchima@ku.edu; bluo@ku.edu.}
\fntext[myfootnote2]{X. Tu is with the Department of Mathematics, The University of Kansas, Lawrence, KS, 66045 USA e-mail: xuemin@ku.edu.}
\fntext[myfootnote3]{G. Wang is with the Department of
Computer Science, Ryerson University, Toronto, ON, M5B 2K3 Canada. e-mail: wangcs@ryerson.ca}



\begin{abstract}
The paper proposes a semantic clustering based deduction learning by mimicking the learning and thinking process of human brains. Human beings can make judgments based on experience and cognition, and as a result, no one would recognize an unknown animal as a car. Inspired by this observation, we propose to train deep learning models using the clustering prior that can guide the models to learn with the ability of semantic deducing and summarizing from classification attributes, such as a cat belonging to animals while a car pertaining to vehicles. 
The proposed approach realizes the high-level clustering in the semantic space, enabling the model to deduce the relations among various classes during the learning process. In addition, the paper introduces a semantic prior based random search for the opposite labels to ensure the smooth distribution of the clustering and the robustness of the classifiers. The proposed approach is supported theoretically and empirically through extensive experiments. We compare the performance across state-of-the-art classifiers on popular benchmarks, and the generalization ability is verified by adding noisy labeling to the datasets. Experimental results demonstrate the superiority of the proposed approach.
\end{abstract}

\begin{keyword}
Deduction learning \sep clustering prior \sep semantic space \sep smooth semantic clustering.
\end{keyword}

\end{frontmatter}

\nolinenumbers

\section{Introduction}
The powerful ability for feature expression and semantic extraction of deep Convolutional Neural Networks (CNNs) has dramatically pushed the flourishing development of computer vision~\cite{huang2017densely}~\cite{he2016deep}~\cite{he2018learning}. 
At the same time, large-scale labeled data samples ensure the effectiveness of supervised learning, which enables the deep learning models to efficiently extract abstract but highly-semantic information for complicated vision tasks~\cite{ma2020mdfn}~\cite{xu2019toward}~\cite{zhang2020self}. Undoubtedly, future learning models should be complex, robust, knowledge-driven, and cognition-based~\cite{marcus2020next}~\cite{cen2020deep}. This defines them with the cognitive ability of self-enhancing, synthesizing knowledge from multiple sources, and deducing based on knowledge and experiences~\cite{marcus2020next}.

\begin{figure*}[t]
    \includegraphics[width=1.0\linewidth]{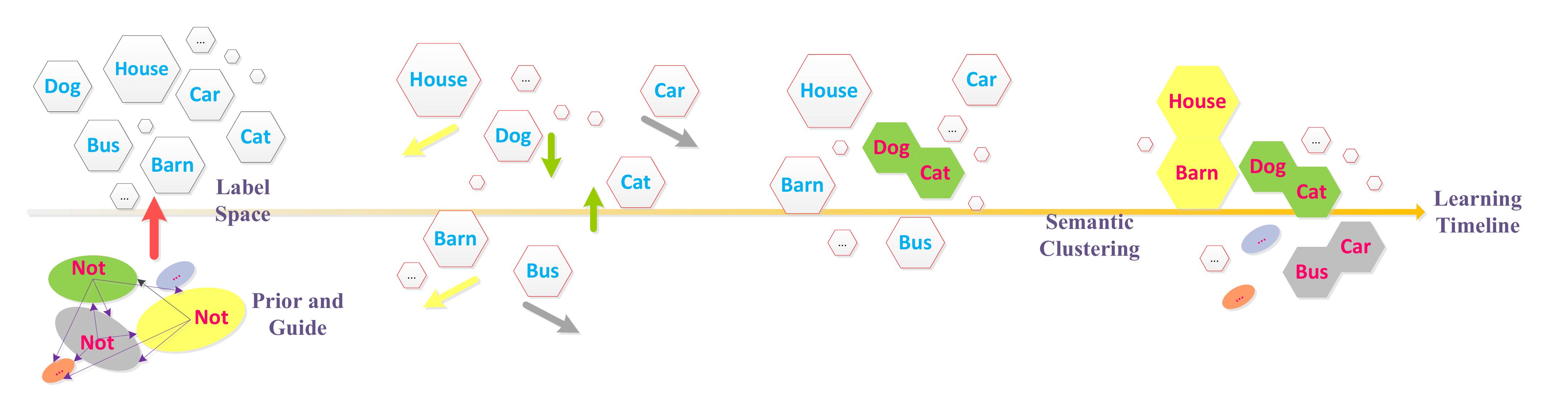}\hfill
    \vspace{-2mm}
    \caption{The deduction progress of semantic clustering. Prior and Guide works as the prior information that is combined with the labels as the labeling input data. The Learning Timeline is the same as the normal classification learning process, but our model provides the possibility of doing high-level semantic clustering by the deduction progress as the aid for the classification task. The model, at the end of the learning timeline, is expected to provide better classification accuracy.}
	\label{fig:conception}
 	\vspace{-3mm}
\end{figure*}

Some complementary and weak supervision information has been exploited to boost the learning performance of models~\cite{li2021sgnet}~\cite{ma2020location}. Such complementary supervision includes early side information\cite{xing2003distance}, privileged information~\cite{vapnik2009new}, and weak supervision based on semi-supervised data~\cite{law2017efficient}~\cite{haeusser2017learning}, noisy labeled data~\cite{gong2017learning}~\cite{misra2016seeing}, and complementary labels~\cite{ishida2017learning}~\cite{yu2018learning}~\cite{kim2019nlnl}. Most of these methods supplement extra direct labeling information or replace expensive accurate labels with cheap labeling information. These complementary labels, in fact, increase the labeling cost as a direct mapping from label space to sample space, named as ``hard labeling" in the later sections. Most importantly, these methods are unable to equip deep models with the ability of self-enhancement, synthesis, and deduction. 

In this paper, we leverage the wide-applied but fundamental supervised image classification and propose deduction learning by semantic clustering. 
We introduce semantic prior, high-level clustering information, represented by different colors in Figure~\ref{fig:conception}, although no names are given for each color. For example, we expect the model know the cat and the dog should be closed to each other, though the model would never know they should be called ``animal". Semantic prior (Prior and Guide to Label Space in Figure~\ref{fig:conception}) is thus introduced into the classification learning models, guiding them to form effective semantic clustering so that they are able to deduce high-level semantic expression (\eg Same color cells go attached together in Figure~\ref{fig:conception}), as shown in Figure~\ref{fig:conception}. 

Inspired by the idea of negative learning~\cite{kim2019nlnl}~\cite{yu2018learning}, we propose to guide the classifier to learn the opposite class that does not belong to the same cluster with the accurate label. For example, if a sample is labeled by ``cat", then our algorithm will tell the classifier that the image is not ``car" or any other random label that belongs to a different cluster other than ``cat", during one learning shot. 

This random search for the opposite label is in accordance with the semantic prior that is fed into the model along with other inputs that specifically refer to the images and their corresponding labels in this work. Statistically, the opposite semantic labels corresponding to a certain accurate label should be chosen with equal probability given the number of learning periods (epochs) is large enough. Theoretically, this proposed method enables a smooth clustering in the semantic space and an effective deduction, which makes the model able to deduce that ``cat" should be one element of an abstract cluster, although the model would never know it can be called ``animal", as shown in the second stage of Figure~\ref{fig:conception}, where the colors ``Green, Grey, Yellow", each represents a higher hierarchical category. Each specific class, like the cat, would be learning that if it belongs to ``Green", then it is totally on the opposite side of other classes that belong to ``Yellow" and ``Grey".   

Finally, it is noticed that the proposed method does not give up the conventional label learning by introducing one composite loss function. This ensures the label learning and the semantic clustering in the same timeline during the learning process. It conforms to the requirement of cognition learning~\cite{marcus2020next}. By this setting, the model could finish high-level semantic expressions, capturing the concepts, similar to ``animal", ``vehicle", ``buildings", etc., as shown in the third stage in Figure~\ref{fig:conception}, where sample classes accomplish clusters. 

The major contributions of this paper are summarized below: 
\begin{itemize}
    \item {\em Semantic Clustering:} We propose a high-level semantic mapping within semantic space, enhancing the semantic expression and providing a certain level of independence for overcoming the limitation of convolution operation at the pixel level. It is realized by introducing a semantic prior which could guide the model to find the opposite semantic label that is not from the same semantic colony with the given true label. 
    \item {\em Deduction Learning:} Deduction learning is realized by the semantic prior and the proposed random search for opposite semantic, which ensures the smoothness of semantic clustering and the robustness of classification. It could be implemented as a plug-in module that could play in arbitrary classification models by introducing a composite loss function.
    \item {\em Robust Improvement:} We achieved stable convergence and robust classification performance on mainstream classification models. It is also verified by working on noisy data environment where there exists a certain ratio of incorrect labels.   
    \item {\em Wide Applicability:} In the proposed method, label learning and semantic clustering follow the same learning timeline, equipping the model with the ability of deduction and cognition. It can be taken as a plug-in module for broad deep learning applications, such as  few-shot learning, zero-shot learning or even semi-supervised learning.
\end{itemize}

The functional source code of the paper can be accessed from the link \text{https://github.com/rucv/deduction-learning}.

\section{Related Work}
\subsection{Hierarchical Semantic Information} 
At first, the research in this field focuses on exploring or utilizing the inherent relations among label classes, or looking for the intermediate representations between classes. ~\cite{akata2015label} formed a label-embedding problem where each class is embedded in the space of attribute vectors so that the attributes act as intermediate representations that enable parameter sharing between classes. Another research in~\cite{deng2014large} uses a label relation graph to encode flexible relations between class labels by building the rich structure of real-world labels. The idea of incremental learning by hierarchical label training has been explored recently by a few other papers. Progressive Neural Networks~\cite{rusu2016progressive} learn to solve complex sequences of task by leveraging prior knowledge with lateral connections. “iCaRL” allows learning in a class incremental way: only the training data for a small number of classes is present at the same time and new classes
can be added progressively~\cite{rebuffi2017icarl}. Tree-CNN~\cite{roy2020tree}, proposes training root network by general classes and then learning the fine classes by corresponding growth-network (mainly learned by leaf structure of the network). While this research direction solves hierarchical semantic learning based on an independent timeline for each stage. Our proposed idea shares the same timeline with the normal classification task throughout the entire learning process which works as an exploration towards cognitive learning. At the same time, the methods above directly provide concrete class relation structure on the basis of the original class labels for training, without exploring the deduction ability of the networks.

Learning with real, concrete complementary labeling information was proposed by~\cite{ishida2017learning} for the image classification task. It was based on an assumption that the transition probability for complementary labels is equal to each other. 
It modified the traditional one-versus-all (OVA) and pairwise-comparison (PC) losses so that it is suitable for the uniform probability distribution, working as an unbiased estimator for the expected risk of true-labeled classification. Later on, the work~\cite{yu2018learning} argued that there are two unsolved problems in the previous work. The first one lies in the fact that the complementary labels tend to be affected by annotators' experience and limited cognition. The other one is the proposed modified OVA and PC losses can not be generalized to more popular losses, such as the cross-entropy loss. Thus, they proposed the transition matrix setting to fix the bias from the biased complementary labels. At the same time, they provided intensive mathematical analysis to prove their proposed setting can be generalized to many losses which directly provides an unbiased estimator for minimizing expectation risk. These works expect better semantic learning by introducing intensive complementary labeling while they do not explore the deduction ability of the networks themselves as well. They are essentially regular label learning. The work in~\cite{kim2019nlnl} automatically generated complementary labels from the given noisy labels and utilized them for the proposed negative learning, incorporating the complementary labeling into noisy label learning.

\subsection{Semantic Labeling in Noisy Cases} 
Some researchers attempt to aid learning in noisy cases by introducing effective semantic label learning. Some attempt to create noise-robust losses by introducing transition probabilities to the field of classification and transfer learning~
\cite{ghosh2017robust}~\cite{zhang2018generalized}. Some propose to use the transition layer to modify deep neural network~\cite{hendrycks2018using}. In other studies, researchers try to re-weight the training sample based on the reliability of the given label~\cite{ren2018learning}~\cite{lee2018cleannet}. Some other approaches try to prune the correct samples from the softmax outputs~\cite{ding2018semi}~\cite{tanaka2018joint}. Different from them, this paper dedicates to the research on how self-clustering and deduction learning ability of networks would influence the robustness in noisy labeling cases. 

This paper tries to explore the self-deduction ability of networks in the semantic space and focuses on guiding the models to fetch effective hierarchical semantic information in a self-learning way by semantic clustering and cognitive accumulation. First, it could completely free the confinement problem of transition probabilities. The proposed semantic prior based random search for opposite semantic ensures the equal probability, providing the mapping independence in semantic space. Second, the semantic clustering boosts positive label learning. For example, if the sample ``cat" has a low classification probability, the semantic clustering could help enhance this confidence by guiding this model to realize that the object is at least an animal, not a ``car". Third, our proposed method shares the same timeline with conventional label learning, enabling effective cognitive accumulation. Moreover, there is no need for specifically defining loss functions for the proposed models. Following the loss formations of the original label learning in specific models is all we need, potentially leading to better generalization.

\section{Problem Setup}
People can make deduction independent of the actual vision behavior. Thus, in deep learning, we expect the model with similar independence to ensure the realization of high-level mapping in semantic space.

\paragraph{Semantic Space for Image Classification}
Semantic space is originally proposed in the natural language domain, aiming to create representations of natural language that are capable of capturing meanings~\cite{baroni2010distributional}. In computer vision, the concept of semantic space is much more abstract. Current semantic extraction is limited both by spatial size and by the individual data sample. However, it should aim to overcome the limitations of convolution-based or receptive-field based approaches operating at the pixel level. Convolution-based deep learning models are fixed at the pixel level and are poor for generalization, which would easily break down if the individual image differs from or is strange to those in the training materials used for the statistical models. Compared to spatial feature learning that performs at the pixel level, semantic learning should be a relatively independent process that works on the semantic element, which is the common description for a class of objects. Moreover, the semantic expression could have multi-levels that describe the relevant or diverging characters of semantic elements. For example, the ``cat" as a  semantic element could be clustered to the high-level semantic expression, something similar to an ``animal". 

\begin{definition}[Semantic Space]\label{lem:1}
	Without loss of generality, let $\mathcal{C}$ be the semantic space, $c\in\mathbb{Z^{+}}$ be the semantic element in $\mathcal{C}$ that appears as one semantic label indicating a specific object class. The semantic relation of different  $c$ is defined by $r$. [c] = \{{1,...,c}\} signifies the set of semantic labels. 
	Then, we have
	\begin{align}
		\mathcal{C} & \stackrel{def}{=} \langle [c], r \rangle
	\end{align}
	where element $c$ is uniformly sampled from $\mathcal{C}$. Tuple $\langle [c], r \rangle$ expresses the fact that semantic elements $c\in[c]$ are linked to each other by the relation $r$, forming the abstract spatial distribution in $\mathcal{C}$.  
\end{definition}

\paragraph{Semantic Cell}
In order to better describe the abstract relation distribution in $\mathcal{C}$, we propose {\it Semantic Cell} as the semantic unit that could label a group of objects that have similar features in feature space $\mathcal{X}$, which corresponds to the element $c\in[c]$ in Definition \ref{lem:1}. It realizes a multi-to-one mapping that bridges the link between feature space $\mathcal{X}$ and semantic space $\mathcal{C}$.
\begin{definition}[Semantic Mapping]\label{lem:2}
Let $g(\mathbf{x})$ be the mapping function of a given multi-class classification learning model that estimates the classification probabilities based on the input sample $\mathbf{x}$ in feature space $\mathcal{X}$. $f(\mathbf{x})$ predicts the classification label $y$ based on the maximum probability principle, mapping the feature sample $\mathbf{x}$ to the corresponding semantic cell $c$ in $\mathcal{C}$. 
\begin{align}\label{eqn:R}
f(\mathbf{x}) \stackrel{def}{=} \arg \max_{i\in[c]} g_{i}(\mathbf{x}) 
\end{align}
\end{definition}
\noindent where $f: \mathcal{X} \rightarrow \mathcal{C}$, the maximum probability of $g$ and $f(\mathbf{x})\in \mathcal{C}$. $g_{i}(\mathbf{x})$ realizes the estimation towards $P(y=i|\mathbf{x})$. 
\paragraph{Semantic Colony}
Semantic Colony $\theta$ takes semantic cell $c$ as individual sample. It clusters $c\in \mathcal{C}$ that hold related semantic information as $\theta$. Based on which, it defines the intra-class relation and inner-class differentiation to realize clustering in semantic space $\mathcal{C}$ with high-order semantic expression. 
\begin{definition}[Semantic Clustering]\label{lem:3}
	Without loss of generality, let $\Theta$ be the distribution of semantic colonies $\theta$ in $\mathcal{C}$. $H$ conducts clustering for semantic cell $c\in \mathcal{C}$ into semantic colony $\theta\sim\Theta$. $\mathbf{c}$ is the vector with the elements of semantic cells $c\in[c]$. 
	Then, we have
	\begin{align}
	    \theta \stackrel{def}{=} H(\mathbf{c}, r_{\mathbf{c}}) 
	\end{align}
	where $H: [c]\rightarrow\Theta$, $\mathbf{c}$ consists of semantic cells $c$ in [c] that are semantically related, and $H$ maps $\mathbf{c}$ to $\theta\sim\Theta$ in accordance with the corresponding semantic relation $r_{\mathbf{c}}$. 
\end{definition}

\section{Methodology}
In this section, we first introduce the general approach that deep neural networks learn optimal classification with hard labels. Then, we discuss the learning with semantic deduction and propose corresponding training and test model. 
\subsection{Conventional Classification Learning}
In multi-class classification, we aim to learn a classifier $f(\mathbf{x})$ that predicts the classification label $y$ for a given observation sample $\mathbf{x}$. Typically, the classifier directly maps $\mathbf{x}$ into the label space $\mathcal{Y}$ by the following function:
\begin{equation}\label{eq4}
    f(\mathbf{x}) = \arg \max_{i\in\mathcal{Y}} W_{i}^{T}\mathbf{x}
\end{equation}
where $f: \mathcal{X} \rightarrow \mathcal{Y}$ and $W_{i}$ refers to the learning parameters of the classifier $f$, with the estimation of $P(y=i|\mathbf{x})$. 

In supervised learning, loss functions are proposed to measure the expectation of the predicting $f(\mathbf{x})$ for $y$~\cite{bartlett2006convexity}. It is typically defined as the expected risk \cite{yu2018learning} for various loss functions. 
\begin{equation}
R(g) = \mathbb{E}_{\mathbf{x},y \sim P(\mathbf{x},y)}[\ell(f(\mathbf{x}),y)]    
\end{equation}

A well-trained classifier $f^{*}$ minimizes this expected risk $R(g)$,
\begin{equation}
f^{*} = \arg \min_{f\in\mathcal{F}} R(f)   
\end{equation}
where $\mathcal{F}$ is the distribution space of $f$.

\subsection{Learning with Semantic Deduction}
In semantic space, the description of hard labels towards objects is limited. To better describe an object or a scene, people usually enumerate related features and associate their prior cognition and experience for a reasonable deduction. Current deep learning models realize feature sensing and learning but lack the proper deduction that could enrich the description of objects. Our previous analysis shows that hard labels in semantic space could potentially build more links, as the discussion in Section 2.1. We introduce the semantic prior, guiding the model to learn the semantic links by deduction. The overview of our method is depicted in Figure~\ref{fig:overview}. 
The overall inputs include training sample images, corresponding labels, and the semantic prior information which provides the high-level semantic hierarchy of current classification labels. The classification model is trained in the same way as the original network. For the green part in Figure~\ref{fig:overview}, given label $y$, the model finds the corresponding opposite semantic label for the sample image according to the semantic prior by an equal-probability random search, shown as the yellow block. Then both the true label and the opposite semantic label are fed into the composite loss we defined. The output of the proposed method is expected of better classification performance in the way of classification accuracy. 

First, the semantic prior works as the criterion for colonies' formation in semantic space $\mathcal{C}$. For example, a cat labeled by $c_{i}\in[c]$ should be grouped into ``animal" colony, if denoted by $\theta_{m}$. Similarly, a car labeled by $c_{j}$ could be grouped into the ``vehicle" colony $\theta_{n}$.  
Second, the semantic deduction is fully performed in semantic space $\mathcal{C}$, instead of defining complementary labels as weak supervision. Thus, we do not need any tedious and laborious labeling work, which would avoid labeling bias from human beings' bias~\cite{yu2018learning}, and the problem that the complementary labeling is essentially non-uniformly selected from the $c-1$ classes other than the true label class ($c>2$). 

\subsection{Equal-Probability Search for Opposite Semantic.}
We assume that the variables $(\mathbf{x}, c, \theta)$ are defined in the space $(\mathcal{X}\times[c]\times\Theta)$, with the joint probability measure $P(\mathbf{x}, c, \theta)$. 
\begin{center}
\begin{figure*}
    \includegraphics[width=1.0\linewidth]{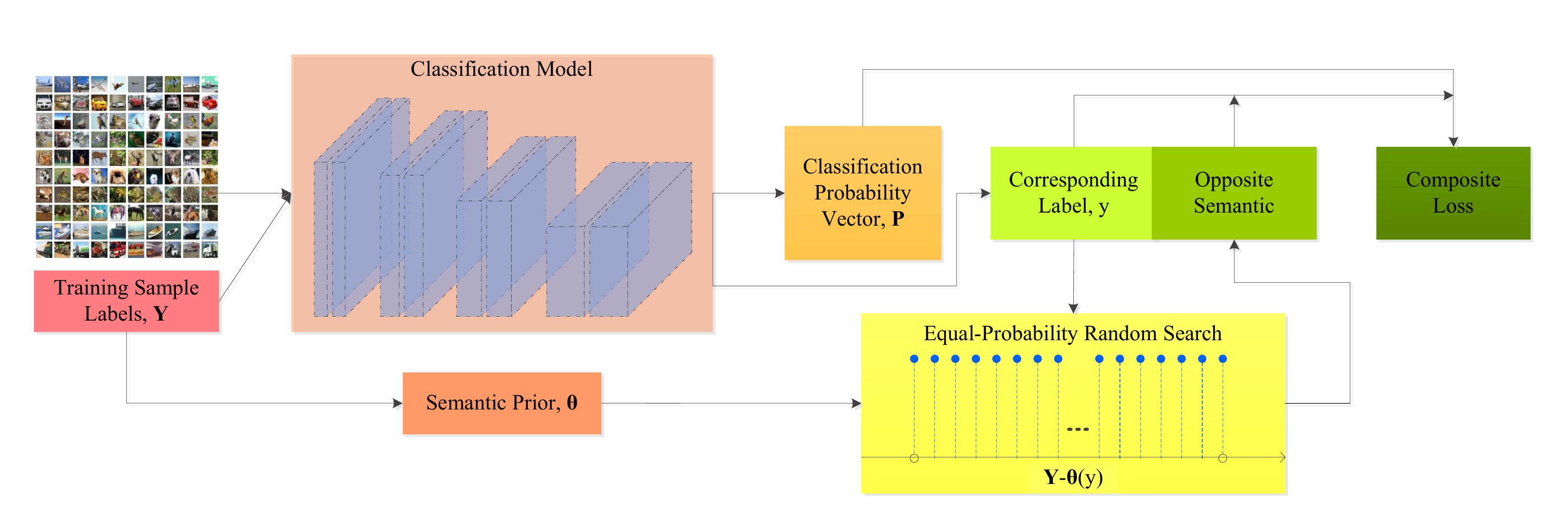}\hfill
    \vspace{-2mm}
    \caption{An overview of the proposed method. We use semantic prior based random search to produce opposite semantic so as to form the composite loss function, guiding the model to form semantic colonies. }
	\label{fig:overview}
 	\vspace{-3mm}
\end{figure*}
\end{center}

Given a sample $(\mathbf{x},c, \theta)\in(\mathcal{X}\times[c]\times\Theta)$, its opposite classification label $\bar{c}$ is randomly selected from $[c]\backslash\theta$. When the sampling frequency in a complete learning period is greatly larger than the class number $n_{[c]}$, the probability for each $\bar{c}\in[c]\backslash\theta$ that indicates how likely it is selected can be expressed as
\begin{equation}
P_{i}(\bar{Y}=\bar{c}|X=\mathbf{x}, Y=c) = \frac{1}{n_{([c]\backslash\theta)}}
\end{equation}
where $n_{([c]\backslash\theta)}$ is the number of semantic cells in $[c]\backslash\theta$. This conclusion verifies that the proposed semantic-prior based random search method for the opposite semantic label $\bar{c}$ is statistically consistent, 
and it realizes the independency of $\bar{c}$ with respective to feature space $\mathcal{X}$ conditioned on $c$ and $\theta$. Thus we have,
\begin{equation}
    P(\bar{Y}=\bar{c}|X=\mathbf{x}, Y=c) = P(\bar{Y}=\bar{c}|Y=c)
\end{equation}

The optimal classifier can be found under the uniform assumption, which has been proven in previous work~\cite{ishida2017learning}.
Meanwhile, the uniform selection means equal probability, ensuring the smooth clustering and the stability and robustness of the learning process. While for man-made complementary labels, they are confined by the fact that $\bar{Y}$ is assumed to be independent of feature $\mathcal{X}$~\cite{yu2018learning}~\cite{ishida2017learning}. 

Based on the exist of independence, the complete mapping from $\mathbf{x}$ to $\bar{y}$ can be set up as the following formula, $\forall i,j\in[c]$, 
\begin{equation}\label{eq9}
\begin{aligned}
    P(\bar{y} | \mathbf{x}) &= \sum_{i\in\theta_{i}, j\notin\theta_{i}} P(\bar{y}=j, y=i | \mathbf{x}) \\
    &= \sum_{i\in\theta_{i}, j\notin\theta_{i}} P(\bar{y}=j|y=i,\mathbf{x}) P(y=i|\mathbf{x})\\
    &= \sum_{i\in\theta_{i}, j\notin\theta_{i}} P(\bar{y}=j|y=i) P(y=i|\mathbf{x})
\end{aligned}
\end{equation}

\subsection{Learning with Smooth Semantic Clustering}
Conventionally, the classifier is trained to learn that the input image belongs to a specific, single class label. Let $\mathbf{x}\in\mathcal{X}$ be the input image, $y\in[c]$  denotes its label. $f(\mathbf{x}, W)$ maps the input $\mathbf{x}$ to the score space: $\mathcal{X}\rightarrow\mathbb{R}^{c}$, as equation \eqref{eq4} shows. The training process is guided by the cross entropy loss (most popular classification cost function) of $f$ as
\begin{equation}
    \mathcal{L}_{\mathbb{P}}(f, y) = -\sum_{m=1}^{c}\mathbf{y}_{m} \log \mathbf{p}_{m}
\end{equation}
where $\mathbf{y}\in\{0,1\}^{c}$ is the one-hot vector form of $y$. $\mathbf{p}_{m}$ is the $m^{th}$ element of probability vector $\mathbf{p}$. The conventional learning process is to optimize the probability $\mathbf{p}_{m}$ according to the given exact label $\mathbf{y}_{m}$ so that $\mathbf{p}_{m}\rightarrow1$. Based on which, we propose a learning algorithm with smooth high-level clustering by guiding $f$ to learn the semantic prior from the opposite label. 
Inspired by~\cite{kim2019nlnl}, the opposite semantic should push $f$ to optimize the corresponding classification probability $\bar{\mathbf{p}}_{m}\rightarrow0$. 
\begin{equation}
    \mathcal{L}_{\mathbb{O}}(f, y) = -\sum_{m=1}^{c}\bar{\mathbf{y}}_{m} \log (1-\bar{\mathbf{p}}_{m})
\end{equation}
where $\mathbf{y}_{m}\in\theta_{m}$,  $\bar{\mathbf{y}}_{m}\in[c]$ and $\bar{\mathbf{y}}_{m} \notin\theta_{m}$. $\bar{\mathbf{p}}_{m}$ is the corresponding classification possibility of label $\bar{\mathbf{y}}_{m}$ in vector $\mathbf{p}$. Thus, the random selection of $\bar{\mathbf{y}}_{m}$ comes from $[c]\backslash\theta$ in every iteration during the training process, shown in Algorithm 1. 
\begin{algorithm}
\caption{Smooth Semantic Clustering}
\begin{algorithmic}[1]
    \INPUT Training label $y\in\mathcal{Y}=[c]$, semantic prior $\hat{\theta}\sim\hat{\Theta}$
    \While{iteration}  
        \If{$y\in\hat{\theta}_{i}$}
            \State $\bar{y}$ = Select randomly from $[c]\backslash\hat{\theta}_{i}$
        \EndIf
        \State There exists another semantic colony $\theta_{j}$ 
        \If{$\bar{y}\in\theta_{j}$}
            \State $y\notin\theta_{j}$
        \EndIf
    \EndWhile  
    \OUTPUT{Opposite semantic label $\bar{y}$ and the learned semantic colony $\theta\sim\Theta$}
\end{algorithmic}
\end{algorithm}

From Algorithm 1, we can observe that the learning for clustering in the semantic space $\mathcal{C}$ is synchronous with image classification. Thus, we can define a composite loss function for an end-to-end semantic clustering classifier. 
\begin{equation} \label{eq:loss}
\begin{aligned}
   \mathcal{L} &= \alpha_{1}\mathcal{L}_{\mathbb{P}} + \alpha_{2}\mathcal{L}_{\mathbb{O}} \\
              &=-\alpha_{1}\sum_{m=1}^{c}\mathbf{y}_{m} \log \mathbf{p}_{m} -\alpha_{2}\sum_{m=1}^{c}\bar{\mathbf{y}}_{m} \log (1-\bar{\mathbf{p}}_{m})
\end{aligned}              
\end{equation}
where $\alpha_{1}$ and $\alpha_{2}$ are weights defining the ratio of $\mathcal{L}_{\mathbb{P}}$ and $\mathcal{L}_{\mathbb{O}}$ respectively.

For a specific input image, there is not only a semantic label $y$ but also other semantic description $\theta\sim\Theta$, and $\theta$ is the high-level semantic expression corresponding to $y$, which builds a new semantic attribute with a larger range. 
Since the opposite semantic is randomly selected with equal probability, the clustering hyperplane in $\mathcal{C}$ can be smooth. 

\subsection{Optimal Learning}
In the case of $\mathcal{L}$, we define the expected risk $\bar{R}(f)$ with the mapping $f:\mathcal{X}\rightarrow\{[c], \Theta\}$. If we can find an optimal $f^{*}$ such that $f^{*}=P(Y=i|X), \forall i\in[c]$, then in theory, we expect that we can find the optimal $\bar{f}^{*}$ such that $\bar{f}^{*}=P(\bar{Y}=i|X),\forall i\in[c]$, where $P(\bar{Y}|X) = \sum_{i\in\theta_{i}, j\notin\theta_{i}} P(\bar{Y}=j,Y=i|X)$ according to equation \eqref{eq9}. If the above idea can be proved, with sufficient training samples, the proposed algorithm with $\bar{R}(f)$ is capable of simultaneously learning a good classification and clustering for $(X,Y,\theta)$. 

Following \cite{yu2018learning}, we will prove that the proposed semantic clustering learning with its corresponding loss function $\mathcal{L}$ is able to identify the optimal classifier. First, we introduce the following assumption~\cite{yu2018learning}, 
\begin{assumption}\label{as:1}
  The optimal learning with mapping $f^{*}$ satisfies $f_{i}^{*}(X)=P(Y=i|X), \forall i\in[c]$ by minimizing the expected risk $R(f)$.
\end{assumption}

Based on this assumption, we are able to prove that $\bar{f}^{*}=f^{*}$ following the theorem below ~\cite{yu2018learning}.
\begin{theorem}
Suppose that Assumption 1 is satisfied, then the minimum solution $\bar{f}^{*}$ of $\bar{R}(f)$ is also the minimum solution $f^{*}$ of $R(f)$, i.e., $\bar{f}^{*}$=$f^{*}$.
\end{theorem}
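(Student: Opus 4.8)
The plan is to follow the transition-matrix argument of \cite{yu2018learning} and try to reduce the opposite-label risk $\bar{R}(f)$ to the ordinary risk $R(f)$, so that a minimizer of one is forced to be a minimizer of the other. The central object is the transition matrix $Q$ with entries $Q_{ij}=P(\bar{Y}=j\mid Y=i)$, which by the equal-probability search equals $1/n_{([c]\backslash\theta_i)}$ whenever $j\notin\theta_i$ and vanishes whenever $j\in\theta_i$, where $\theta_i$ is the semantic colony of class $i$ from Definition~\ref{lem:3}. Equation~\eqref{eq9} already records the induced posterior relation $P(\bar{Y}=j\mid\mathbf{x})=\sum_{i}Q_{ij}\,P(Y=i\mid\mathbf{x})$, i.e. the complementary posterior vector is $Q^{\top}$ applied to the true posterior vector; this is the bridge I would exploit.

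First I would split the composite risk as $\bar{R}(f)=\alpha_1\,\mathbb{E}[\mathcal{L}_{\mathbb{P}}]+\alpha_2\,\mathbb{E}[\mathcal{L}_{\mathbb{O}}]$ and treat the two terms separately. The first term is exactly the cross-entropy risk, whose minimizer is $f^{*}_i(X)=P(Y=i\mid X)$ by Assumption~\ref{as:1}, so it already singles out the target $f^{*}$ and, in particular, resolves the probabilities of classes lying inside the same colony. For the second term I would write it as a conditional expectation over the opposite label and insert the transition relation,
\begin{align}
\mathbb{E}[\mathcal{L}_{\mathbb{O}}] &= -\,\mathbb{E}_{\mathbf{x}}\sum_{j}P(\bar{Y}=j\mid\mathbf{x})\log\bigl(1-p_{j}(\mathbf{x})\bigr) \notag\\
&= -\,\mathbb{E}_{\mathbf{x}}\sum_{i}P(Y=i\mid\mathbf{x})\sum_{j}Q_{ij}\log\bigl(1-p_{j}(\mathbf{x})\bigr),
\end{align}
the second line merely swapping the order of summation over the true and the opposite labels. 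The aim is then to argue that the reweighted opposite loss is stationary, under the simplex constraint $\sum_j p_j=1$, at the same decision as $f^{*}$, so that adding it with weight $\alpha_2$ does not displace the minimizer and $\bar{f}^{*}=f^{*}$ follows.

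The main obstacle lives entirely in this last step, and it is subtler than the plain invertibility used in \cite{yu2018learning}. Under the colony construction the rows of $Q$ indexed by classes in a common colony are identical, so $Q$ is rank-deficient at the class level and the opposite term alone constrains $f$ only up to colony membership: minimizing $\mathbb{E}[\mathcal{L}_{\mathbb{O}}]$ drives mass away from the colonies carrying little true probability but cannot, by itself, separate classes within a colony. The delicate point I would have to establish is therefore a compatibility statement rather than an inversion: that the between-colony preference induced by the opposite term agrees at the $\arg\max$ level of Definition~\ref{lem:2} with the within-colony posterior pinned down by the first term, so that the two gradients reinforce the same classifier instead of competing. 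I expect this reconciliation, made possible by the smoothness of the uniform equal-probability search, to be the crux of the proof; any bias in the search would break the row structure of $Q$ and with it the clean correspondence, which is precisely why the equal-probability assumption is indispensable.
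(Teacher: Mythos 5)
Your route is the same as the paper's --- both hinge on the transition matrix $Q$ (the paper calls it $\mathbf{C}$) and on the posterior bridge of equation~\eqref{eq9}, $\bar{\mathbf{s}}(X)=\mathbf{C}^{T}\mathbf{s}(X)$ --- but your proposal stops exactly where the proof has to happen. Everything up to your displayed rewriting of $\mathbb{E}[\mathcal{L}_{\mathbb{O}}]$ is setup; the conclusion $\bar{f}^{*}=f^{*}$ is then announced as ``the aim,'' and the compatibility of the two loss terms as ``the delicate point I would have to establish,'' and neither is ever established. This is a genuine gap, not a presentational one: a short Lagrange computation shows that, over the probability simplex, the opposite term alone is stationary at $p_{j}=1-(c-1)\,\bar{s}_{j}(X)$, which under your (correctly observed) colony-constant, rank-deficient $Q$ is in general \emph{not} the true posterior $s_{j}(X)$ --- the identity $p_{j}=s_{j}$ only drops out in the classical uniform complementary-label case $\bar{s}_{j}=(1-s_{j})/(c-1)$. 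So the two terms of the composite loss genuinely pull toward different functions; the ``reinforcement'' you hope for cannot be had for free, could at best hold at the $\arg\max$ (decision) level of Definition~\ref{lem:2}, and proving even that weaker statement requires an argument you have not sketched.

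For comparison, the paper closes this step by assertion: after writing $\bar{R}(f)=\sum_{j}P(\bar{Y}=j)\bar{R}_{j}(f)$ and $\bar{\mathbf{s}}(X)=\mathbf{C}^{T}\mathbf{s}(X)$, it declares $\bar{f}^{*}(X)=\arg\max_{i}\mathbf{C}^{T}\mathbf{s}_{i}(X)=\mathbf{C}^{T}\arg\max_{i}\mathbf{s}_{i}(X)=\mathbf{C}^{T}f^{*}(X)$, i.e., it commutes $\arg\max$ with multiplication by $\mathbf{C}^{T}$ --- a step that is not valid for a general stochastic matrix, and which is precisely where the invertibility assumed in the cited work of Yu et al.\ (and absent here, as you note) would have to do the work. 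Your diagnosis of where the difficulty lies is therefore sound, and in fact sharper than the paper's own treatment, since the paper never splits the composite risk or confronts the rank deficiency at all; but judged as a proof attempt, the proposal is incomplete, because the single claim that constitutes the theorem is left as a conjecture rather than derived.
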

\begin{proof}
Based on Assumption 1, loss function $\mathcal{L}$, and function~\eqref{eq9} for the learning in the proposed smooth semantic clustering, we have
\begin{equation}
\begin{aligned}
   f^{*}_{i}(X) &= P(\bar{Y}=j|X) \\
   &=\sum_{i\in\theta_{i}}P(\bar{Y}=j,Y=i|X), \forall i,j\in[c], j\notin\theta_{i}
\end{aligned} 
\end{equation}
Let $\bar{\mathbf{s}}(X)=[P(\bar{Y}=1|X),\cdots,P(\bar{Y}=c)|X)]$ and $\mathbf{s}(X)=[P(Y=1|X),\cdots,
P(Y=c)|X)]$. According to the discussion of~\cite{yu2018learning}, we rewrite $\bar{R}(f)$ as
\begin{equation}
\begin{aligned}
    \bar{R}(f)&=\int_{X}\sum_{j=1}^{c}P(\bar{Y}=j)P(X|\bar{Y}=j)\mathcal{L}(f(X), \bar{Y}=j)dX \\
    &=\sum_{j=1}^{c}P(\bar{Y}=j)\int_{X}P(X|\bar{Y}=j)\mathcal{L}(f(X), \bar{Y}=j)dX \\
    &=\sum_{j=1}^{c}P(\bar{Y}=j)\bar{R}_{j}(f)
\end{aligned}    
\end{equation}
where $P(\bar{Y}=j)$ is given when we have $Y=i$, distributed as $P(\bar{Y}=j|Y=i)$ according to Algorithm 1. $\bar{R}_{j}(f)=\int_{X}P(X|\bar{Y}=j)\mathcal{L}(f(X), \bar{Y}=j)dX$. Thus, if we use $\mathbf{C}$ to denote the operation form of $P(\bar{Y}=j|Y=i)$, according to function~\eqref{eq9} and the above convergence analysis, we have
\begin{equation}\label{eq14}
    \mathbf{\bar{s}}(X)=\mathbf{C}^{T}\mathbf{s}(X)
\end{equation}
where $P(\bar{Y}=j|Y=i)$ is realized based on the random search with semantic prior. 
Equation~\eqref{eq14} ensures that
\begin{equation}
    \bar{f}^{*}(X)=\arg\max_{i}\mathbf{C}^{T}\mathbf{s}_{i}(X)=\mathbf{C}^{T}\arg\max_{i} \mathbf{s}_{i}(X) = \mathbf{C}^{T} f^{*}(X)
\end{equation}
where $i\in[1,c]$. Thus, we have $\bar{f}^{*} \Longleftrightarrow f^{*}$. The proof is completed.
\end{proof}

\section{Experiment}
In this section, we study the impact of the proposed semantic deduction algorithm on popular image classifiers using mainstream benchmark datasets. 
In order to show that our algorithm is able to generalize to complex or disordered data environment with better robustness, we follow each specific experimental setting of the baseline methods, and only vary the data environment by producing noisy labels at certain ratios. 

\paragraph{Learning Scenarios} To identify the gain of the proposed deduction learning algorithm, we design fairly comparable learning scenarios where only the deduction related hyper-parameters are changed from the default original setting while keeping all the rest unchanged. The assignment for the weights of $\alpha_{1}$ and $\alpha_{2}$ in equation\ref{eq:loss} is based on the experiment performance. We introduce the most core algorithm idea of the current state-of-the-art works of complementary supervision information designed for various fields~\cite{kim2019nlnl}~\cite{yu2018learning}~\cite{ishida2017learning} into our experiment setting as one of the baselines. Details are listed below:
\begin{itemize}
    \item {\em Default Setting (OT):} In this setting, we train the original baseline classification models and keep all the hyper-parameters unchanged as in the corresponding published papers and public code. We take both classical and state-of-the-art CNN classifier networks into consideration, including Multilayer Perceptron (MLP)\cite{nokland2019training}, VGG~\cite{nokland2019training}, ResNet\cite{he2016deep}, DenseNet~\cite{huang2017densely}, Wresnet~\cite{zagoruyko2016wide}, ResNext\cite{xie2017aggregated}. All of them are trained and compared with our proposed methods fairly.
    \item {\em Random Opposite Semantic (RT):} Under this setting, 
    we exploit the opposite semantic label $\bar{y}\in[c]$ that corresponds to the original accurate label $y\in[c]$, satisfying $\bar{y}\neq y$. We use random search for the opposite label in the label pools $[c]$~\cite{kim2019nlnl} instead of hard labeling so as to avoid bias \cite{yu2018learning}~\cite{kim2019nlnl}. Thus, this setting does not refer to the semantic prior when looking for the opposite semantic label $\bar{y}$. All other settings follow the Default Setting. 
    \item {\em Semantic Deduction (SD):} We implement the proposed deduction learning by semantic clustering. The opposite semantic label $\bar{y}$ is randomly selected from $[c]\backslash\hat{\theta}_{i}$, where $[c]$ is the set of semantic labels. ${\hat\theta}_{i}$ is the $i\_th$ semantic colony (details in Algorithm 1). Thus, it naturally satisfies $\bar{y}\neq y$, $y$ referring to the original accurate label $y\in[c]$. It strictly follows the training setting with the identical hyper-parameters to those in the Default Setting. 
\end{itemize}
\paragraph{Data Sets}
\begin{itemize}
    \item {\em Fashion-MNIST:} Fashion-MNIST is a new image classification benchmark with different data classes of clothing\cite{xiao2017fashion}. The dataset has an image size of 28$\times$28, input channels of $1$, and the number of classes of $10$. In our SD setting, we provide the semantic prior for it to group the $10$ classes fashion clothing into three groups: ``clothes", ``shoes", and ``bags". 
    
    \item {\em CIFAR10:} CIFAR10 consists of $50,000$ training images and $10,000$ test images of dimension $32\times32$. It has a total of $10$ general classes\cite{krizhevsky2009learning}. In the SD setting, we group the $10$ classes into two groups, ``vehicles" and ``animals". 
    \item {\em CIFAR100:} CIFAR100 has $50,000$ training images and $10,000$ test images of the resolution of $32\times32$. It has a total of $100$ classes, with $500$ training images in each class \cite{krizhevsky2009learning}. For the SD setting, we provide two schemes, ``SD\_v1" and ``SD\_v2. The former one divides classes into ``7" groups, including ``people", ``animal", ``man-made stuff", ``transportation", ``plants", ``building", and ``nature". The latter contains 8 groups: ``people", ``animal", ``life appliances", ``transportation", ``food", ``plants", ``building", and ``nature", isolating ``food" from the ``man-made" as an independent expression.
\end{itemize} 

\begin{figure*}[t]
    \subfigure{\includegraphics[width=0.5\linewidth]{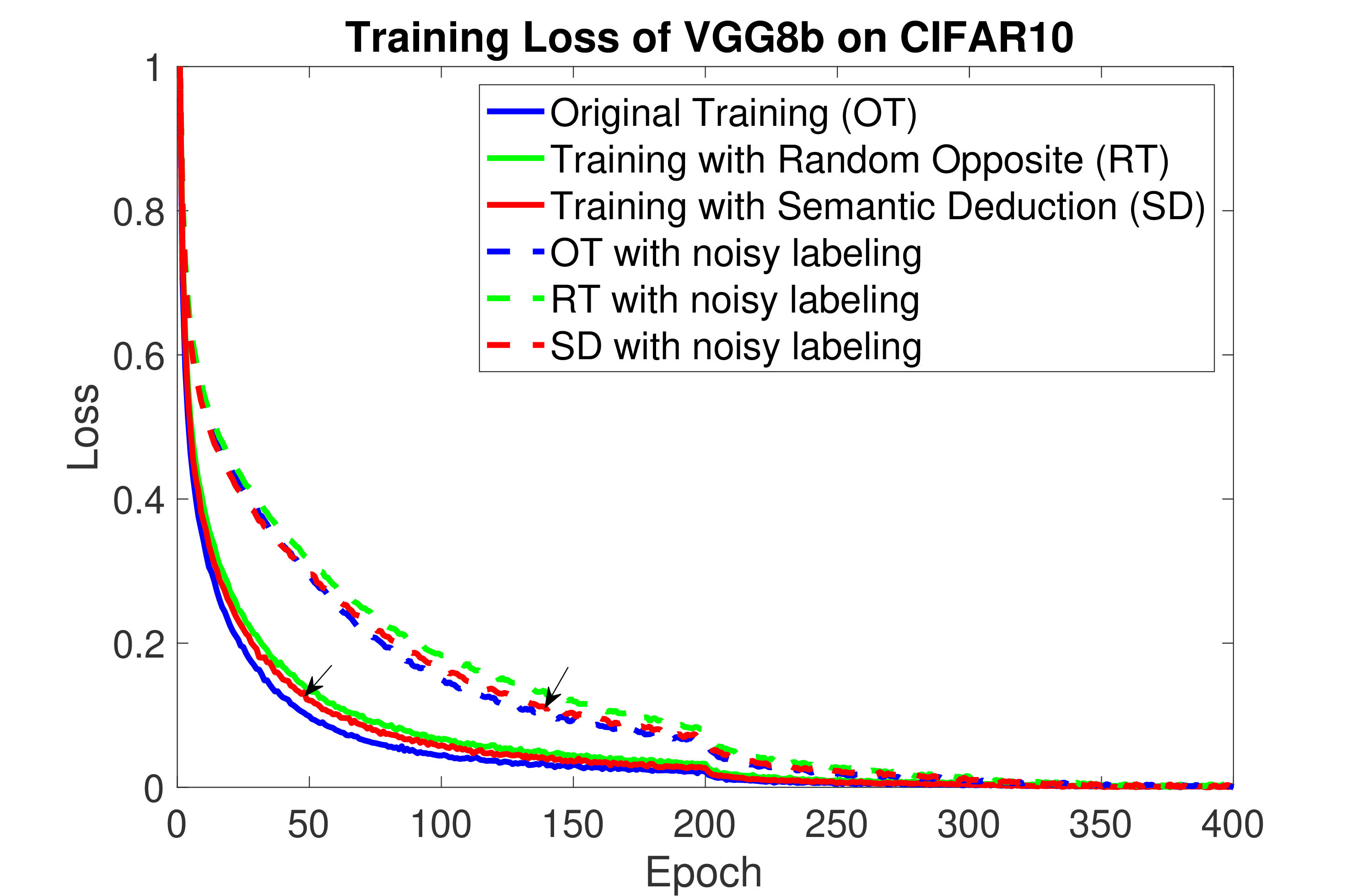}}\hfill
    \subfigure{\includegraphics[width=0.5\linewidth]{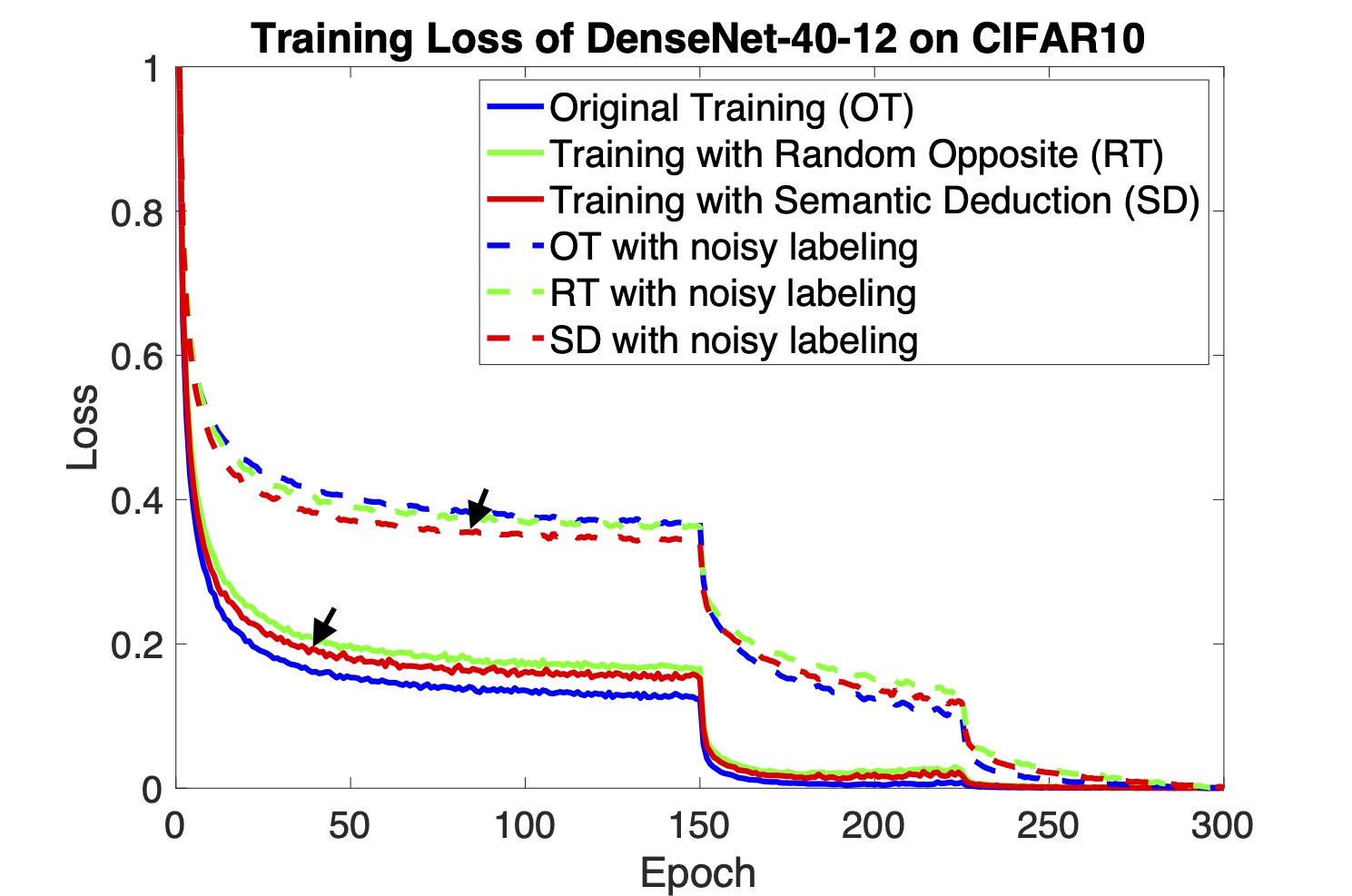}}\hfill
    \subfigure{\includegraphics[width=0.5\linewidth]{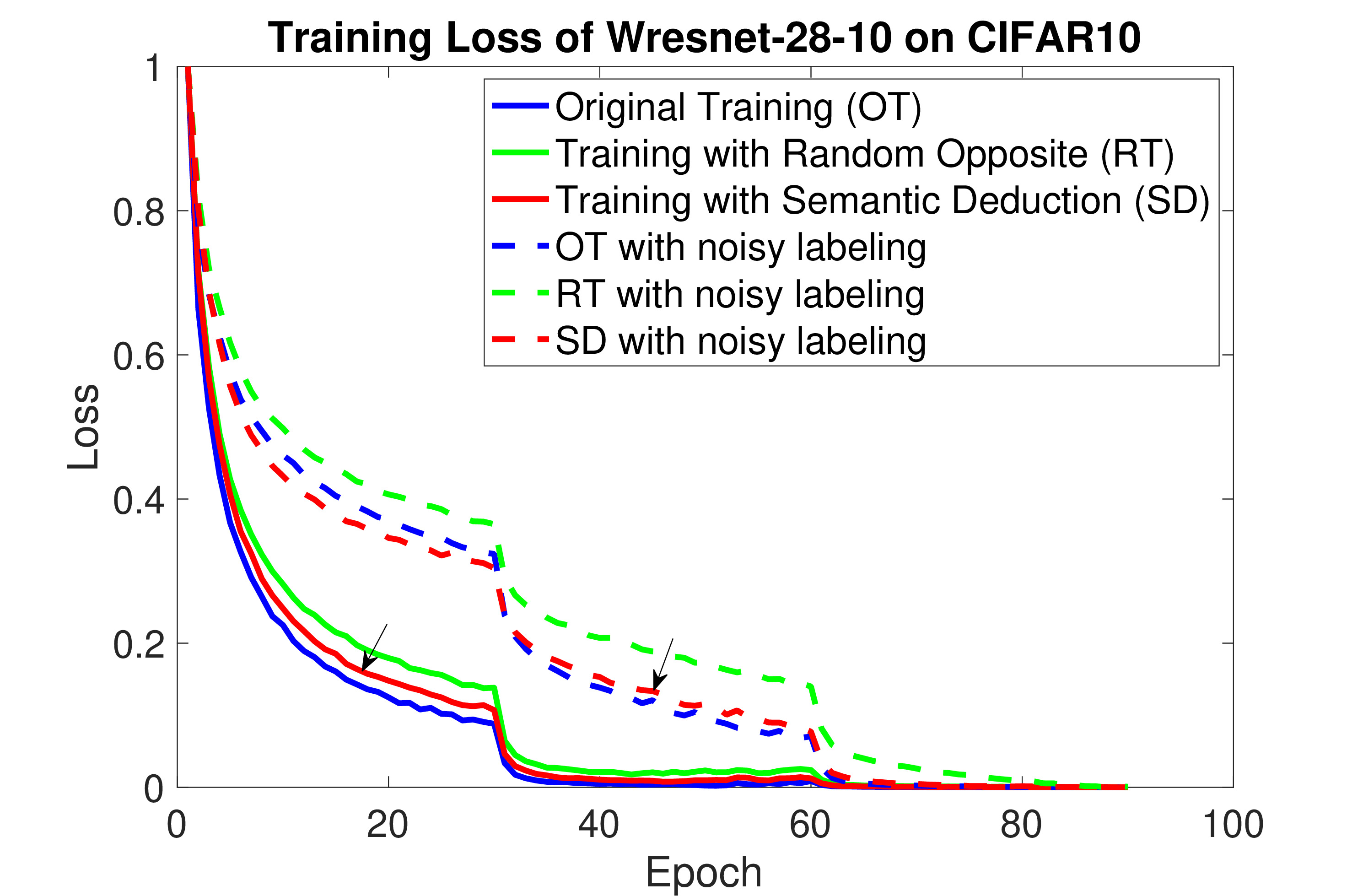}}\hfill   
    \subfigure{\includegraphics[width=0.5\linewidth]{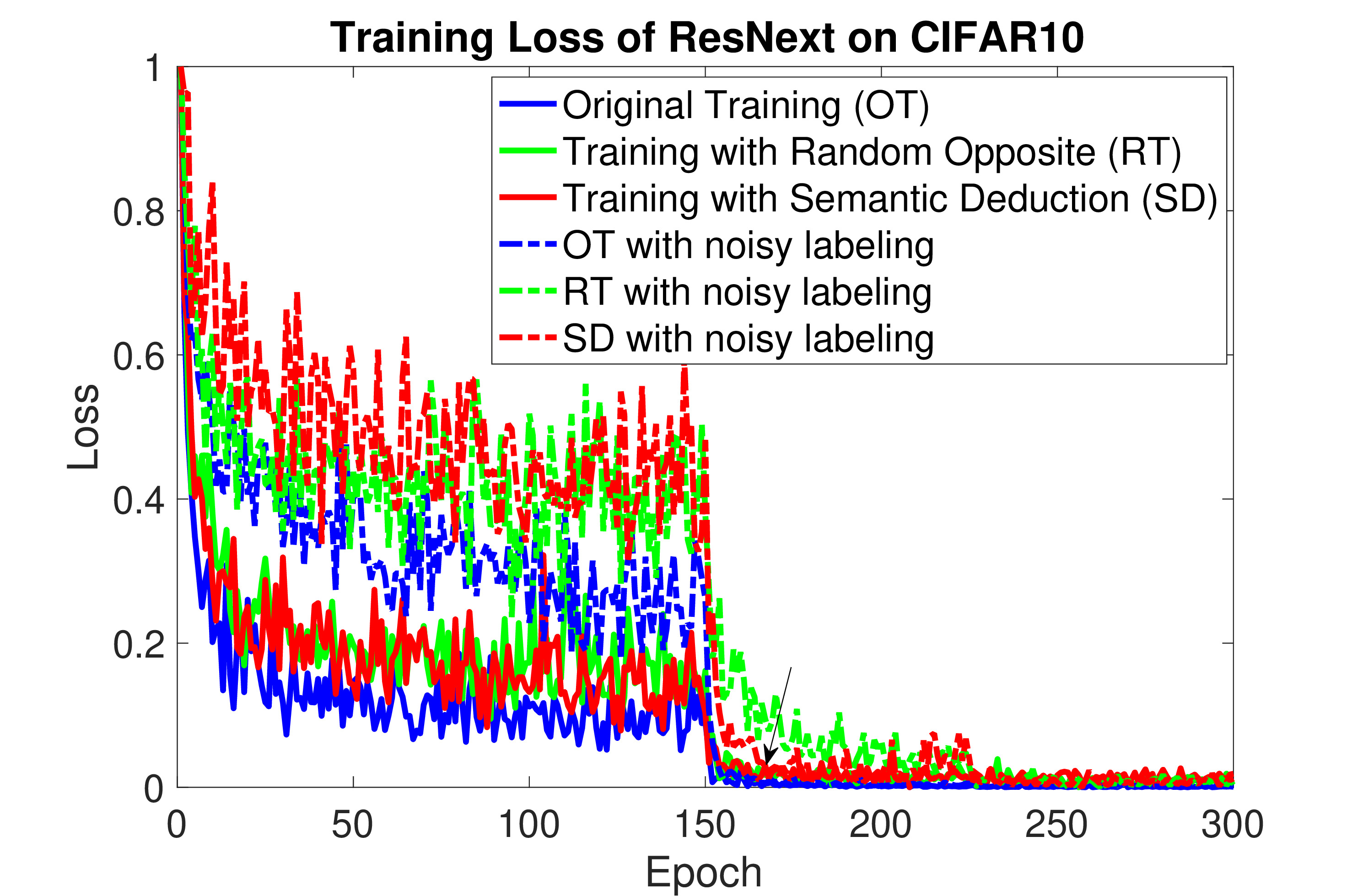}}\hfill
    \vspace{-2mm}
    \caption{Convergence performance of different models by training loss on CIFAR10.}
	\label{fig:cifar10}
\end{figure*}

\begin{figure*}
    \subfigure{\includegraphics[width=0.5\linewidth]{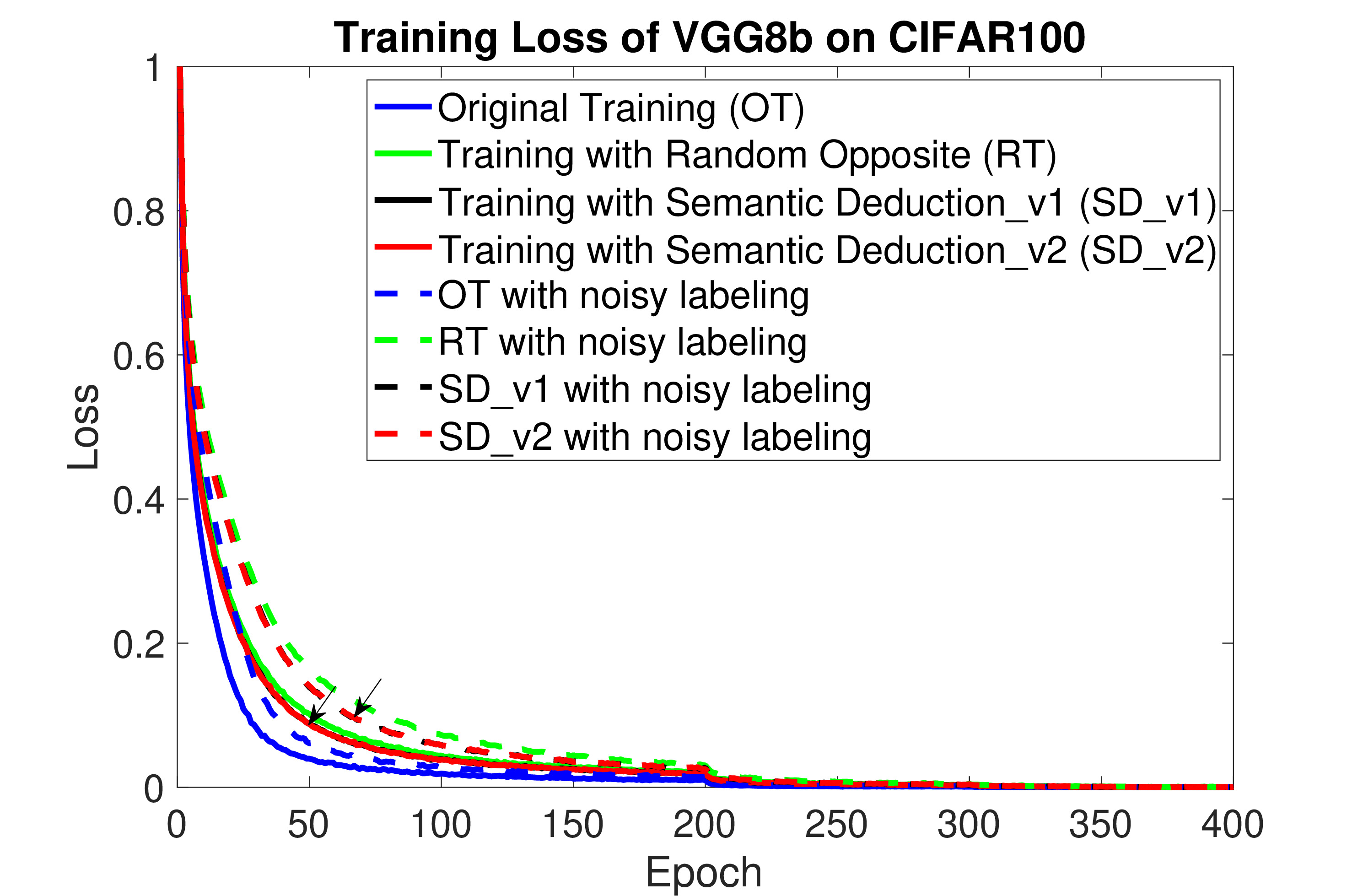}}\hfill
    \subfigure{\includegraphics[width=0.5\linewidth]{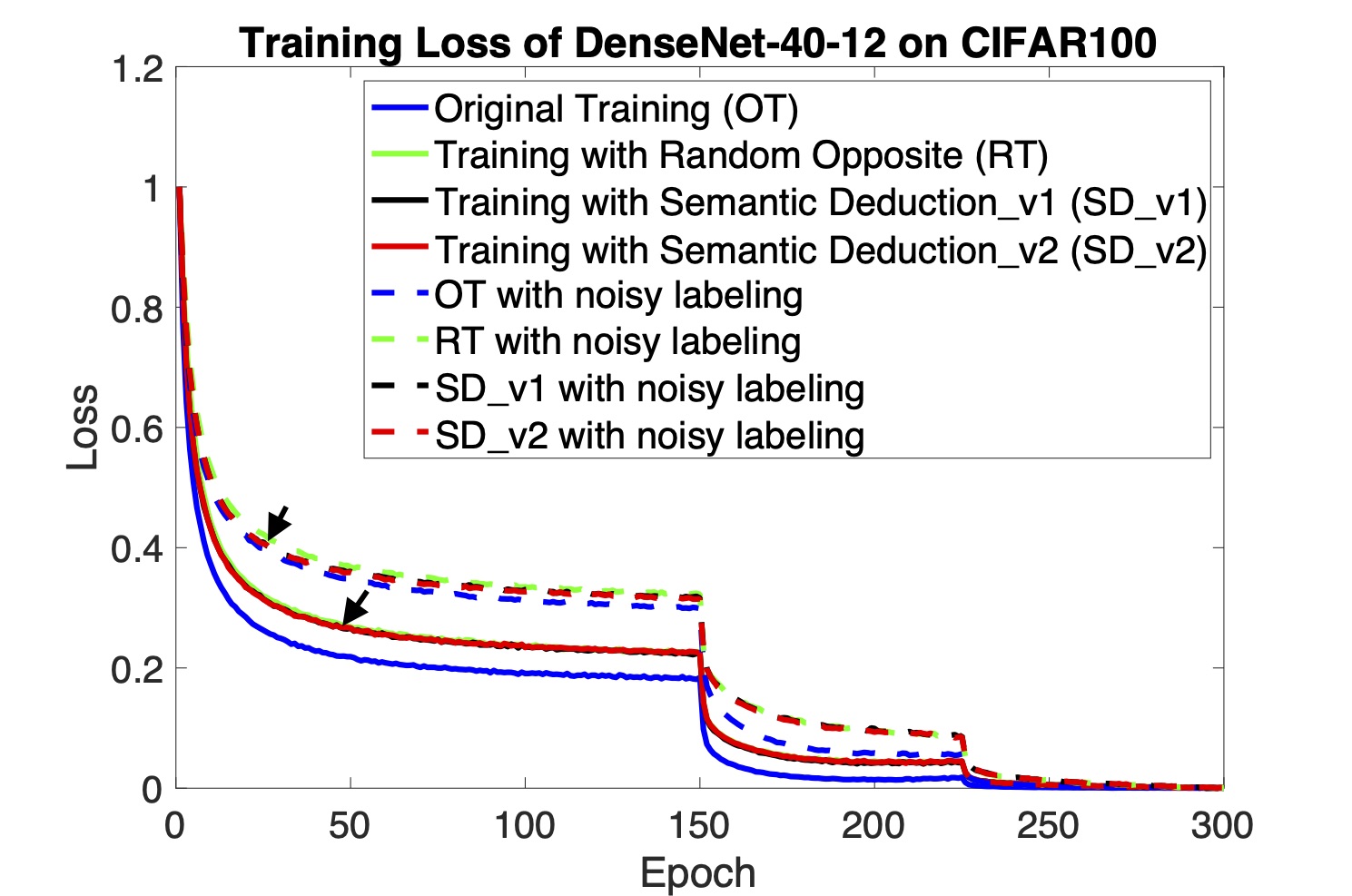}}\hfill
    \subfigure{\includegraphics[width=0.5\linewidth]{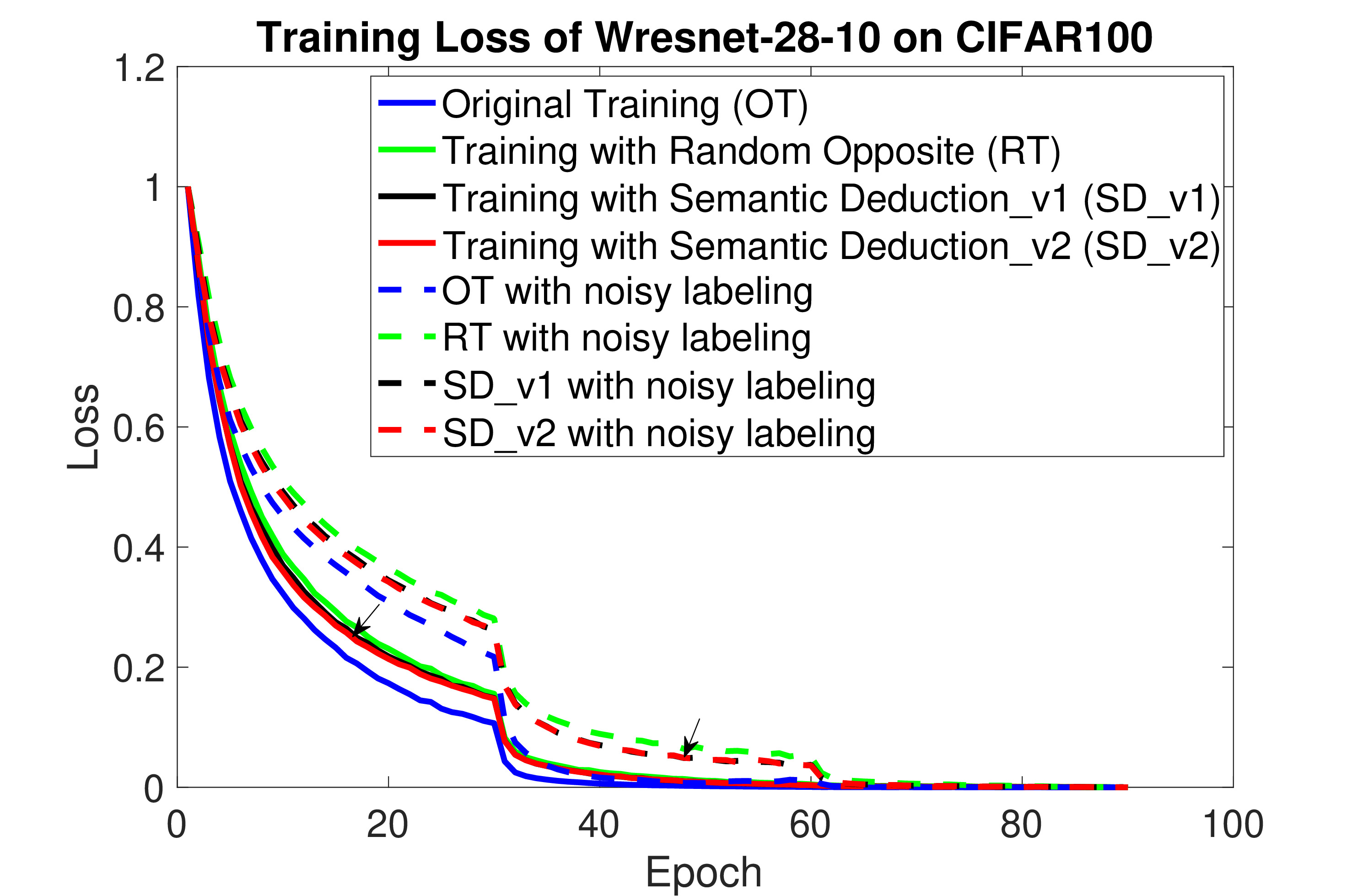}}\hfill    
    \subfigure{\includegraphics[width=0.5\linewidth]{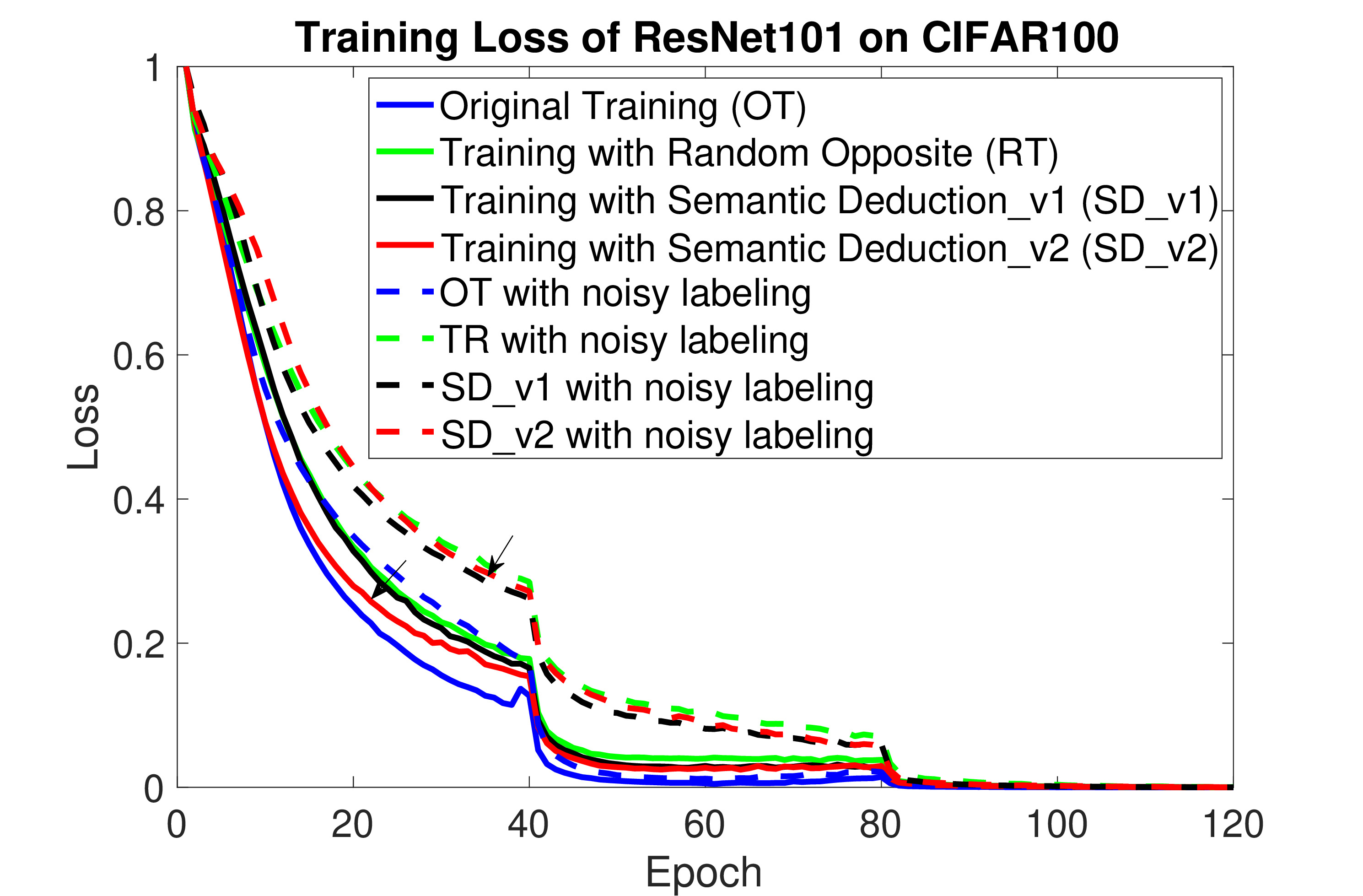}}\hfill
    \caption{Convergence performance comparison by training loss on CIFAR100.}
	\label{fig:cifar100}
\end{figure*}

\subsection{Results in Original Data Environment}
We first evaluate our proposed algorithm in the original data environment, directly using the images from the data sources. From the mathematical analysis in Section 4, the identification of optimal learning depends on stable convergence performance. Thus, we summarize the learning behaviors of each approach on CIFAR10 and CIFAR100 in Figure~\ref{fig:cifar10} and Figure~\ref{fig:cifar100}, respectively.

\paragraph{Convergence Performance} To obtain a fair comparison, we normalize the loss distribution to $[0,1]$ for all scenarios. \textbf{(a)} Our algorithm generally shows consistent convergence with different classifiers, as shown in the red or yellow solid lines in Figure~\ref{fig:cifar10} and~\ref{fig:cifar100}. We can see that SD usually converges faster than RT as the black arrows shown in almost every case. This consistent performance verifies that the proposed self-clustering learning process helps speed up convergence, assisting the classifier to execute the right decision, although there is no additional labeling information fed into these models. \textbf{(b)} From all the sub-figures in both Figure~\ref{fig:cifar10} and~\ref{fig:cifar100}, although SD converges a little bit slower than the original baseline (solid blue line) at the first stage, they finally obtain similar stability. This is due to the introduction of the additional learning process, semantic clustering. \textbf{(c)} Although we design two semantic prior schemes, SD\_v1 and SD\_v2, they both show very consistent convergence, where the red and black solid lines even overlap with each other in Figure~\ref{fig:cifar100}. \textbf{(d)} The fluctuation in ResNext is due to the non-averaged loss value in the original code for each epoch. From the above observation, it is evident that the introduction of semantic clustering achieves stable and fast convergence, theoretically qualified to yield an optimal classification mapping. 

\setlength{\tabcolsep}{4pt}
\begin{table}
\begin{center}
\begin{tabular}{c|c|c|c|c|c|c}
\hline
Method & Solver & $\alpha_{1}$ & $\alpha_{2}$ & OT & RT & SD(ours)\\
\hline
MLP-3\cite{nokland2019training} & adam & $1$ & $0.5$ & 91.5 & 91.77 & \textbf{91.78}\\
VGG8b~\cite{nokland2019training} & adam & $1$ & $0.3$ & 95.45 & 95.47 & \textbf{95.53}\\
VGG8b(multi=2.0)~\cite{nokland2019training} & adam & $1$ & $0.3$ & 95.33 & 95.52 & \textbf{95.54}\\
\hline
\end{tabular}
\end{center}
\vspace{-2mm}
\caption{Classification accuracy on FashionMNIST.}
\label{table:mnist}
\end{table}
\setlength{\tabcolsep}{1.4pt}

\setlength{\tabcolsep}{4pt}
\begin{table}
\begin{center}
\begin{tabular}{c|c|c|c|c|c|c}
\hline
Method & Solver & $\alpha_{1}$ & $\alpha_{2}$ & OT & RT & SD(ours)\\
\hline
VGG8b~\cite{nokland2019training} & adam & $1$ & $0.5$ & 94.12 & 94.14 & \textbf{94.32}\\
ResNet18\cite{he2016deep} & adam & $1$ & $0.5$ & 93.45 & 93.57 & \textbf{93.62}\\
DenseNet-40-12~\cite{huang2017densely} & sgd & $1$ & $0.5$ & 94.68 & 94.79 & \textbf{94.92}\\
Wresnet-28-10~\cite{zagoruyko2016wide} & sgd & $1$ & $0.5$ & 94.52 & 94.58 & \textbf{94.80}\\
ResNext~\cite{xie2017aggregated} & sgd & $1$ & $0.5$ & 96.16 & 96.26 & \textbf{96.30}\\
\hline
\end{tabular}
\end{center}
\vspace{-2mm}
\caption{Classification accuracy on CIFAR10.}
\label{table:cifar10}
\end{table}
\setlength{\tabcolsep}{1.4pt}

\paragraph{Classification Accuracy}
We summarize the classification accuracy in Table~\ref{table:mnist},~\ref{table:cifar10}, and~\ref{table:cifar100}. \textbf{(a)} Generally, SD obtains almost the highest classification accuracy across the three benchmarks for all the compared classifiers. These classifiers include two mainstream solvers, adam~\cite{kingma2014adam} and sgd~\cite{bottou2010large}, but SD leads the performances in both situations. \textbf{(b)} Although the improvement brought by SD is limited for Fashion MNIST, this is mainly due to the relatively simple classification task and the limited number of classes. When it comes to CIFAR100 as shown in Table~\ref{table:cifar100}, SD always yields 1-3\% increase in accuracy compared with OT. \textbf{(c)} We can observe that RT in some special situations achieves high performance, such as RT winning SD in the case of Wresnet-28-10. However, its performance is not as stable as SD, which even yields lower classification accuracy than OT, such as that in the case of ResNet101. These observations imply that the proposed smooth semantic clustering algorithm can effectively enhance the performance of state-of-the-art classifiers, preserving a very stable learning state at the same time, potentially leading to its broader applicability. 

Compared with the recent publication~\cite{roy2020tree}, which proposes a network learning algorithm that organizes the incrementally learning data into feature-driven super-class and improves upon existing hierarchical CNN models by introducing the capability of self-growth, so that the finer classification is done. This idea, to a certain degree, shares a similar concept with our idea, except that we do not need to label data with super-class and keep the same hierarchical structure during the learning process. We compare its results with ours in Table~\ref{table:tree-cifar10} and Table~\ref{table:tree-cifar100}, respectively. It can be seen from them that, although the Tree-CNN models provide a competitive accuracy with its base network VGG-11, it shows no advantages over our SD models. SD models obtain a more than 4\% advantage over incremental learning methods (VGG11 and Tree-CNN in Table~\ref{table:tree-cifar10}) on CIFAR 10 and averagely 5\% higher than incremental learning methods on CIFAR100 considering the test classification accuracy. It demonstrates that our proposed high-level semantic clustering algorithm, in a direct supervised learning, could further improve the adaptive ability towards data, and keep a stable learning process, which is further verified in the following sections. Most importantly, we focus on the exploration towards the self-deducing ability of CNN models, which is different from all the above-mentioned ideas. 

\setlength{\tabcolsep}{4pt}
\begin{table}
\begin{center}
\begin{tabular}{c|c|c|c|c|c|c|c}
\hline
Method & Solver & $\alpha_{1}$ & $\alpha_{2}$ & OT & RT & SD\_v1 & SD\_v2\\
\hline
VGG8b~\cite{nokland2019training} & adam & $1$ & $0.5$ & 73.85 & 74.78 & \textbf{74.95} & 74.83\\
ResNet50~\cite{he2016deep} & sgd & $1$ & $0.5$ & 73.78 & 76.36 & 75.59 & \textbf{76.64}\\
DenseNet-40-12~\cite{huang2017densely} & sgd & $1$ & $0.5$ & 74.89 & 75.82 & \textbf{76.26} & 75.73\\
Wresnet-28-10~\cite{zagoruyko2016wide} & sgd & $1$ & $0.5$ & 76.98 & \textbf{77.62} & 77.54 & 77.59\\
ResNet101~\cite{he2016deep} & sgd & $1$ & $0.5$ & 75.3 & 74.45 & 75.51 & \textbf{76.29}\\
ResNet152~\cite{he2016deep} & sgd & $1$ & $0.3$ & 72.21 & 73.25 & 74.38 & \textbf{74.40}\\
\hline
\end{tabular}
\end{center}
\vspace{-2mm}
\caption{Classification accuracy on CIFAR100.}
\label{table:cifar100}
\end{table}
\setlength{\tabcolsep}{1.4pt}

\setlength{\tabcolsep}{1.5pt}
\begin{table}
\begin{center}
\begin{tabular}{c|c|c|c|c|c|c}
\hline
Model & VGG11 & Tree-CNN & VGG8b & ResNet18-SD & DenseNet-SD & WresNet-SD\\
\hline
Test Accuracy & 90.51 & 86.24 & 94.32 & 93.62 & 94.92 & 94.80\\
\hline
\end{tabular}
\end{center}
\vspace{-2mm}
\caption{Comparison with Tree-CNN on Cifar10, where SD refers to models that are applied with our proposed algorithm. VGG11 and Tree-CNN are trained by "old" and "new" data in an incremental way~\cite{roy2020tree}.}
\label{table:tree-cifar10}
\end{table}
\setlength{\tabcolsep}{1.5pt}

\setlength{\tabcolsep}{1.5pt}
\begin{table}
\begin{center}
\begin{tabular}{c|c|c|c|c|c|c}
\hline
Model & VGG11 & Tree-CNN5 & Tree-CNN10 & Tree-CNN20 & VGG8b-SD & Wresnet-28-10-SD\\
\hline
Test-Acc & 72.23 & 69.85 & 69.53 & 68.49 & 74.95 & 77.54\\
\hline
\end{tabular}
\end{center}
\vspace{-2mm}
\caption{Comparison with Tree-CNN on Cifar100, where Test-Acc stands for the Test Accuracy. SD refers to the corresponding models that are applied with our proposed algorithm. VGG11 and Tree-CNN are trained by "old" and "new" data in an incremental way~\cite{roy2020tree}.}
\label{table:tree-cifar100}
\end{table}
\setlength{\tabcolsep}{1.5pt}

\subsection{Results in Noisy Data Environment}
In this section, we evaluate the proposed algorithm in noisy data environments. We produce a noisy data environment by adding noise labels to the original data sources. Specifically, we implement this operation on CIFAR10 and CIFAR100, where 10\% of the training data in each data set are randomly labeled by incorrect labels that belong to the same colony with the correct labels. For example, if the image is labeled correctly by ``cat", then we randomly search another class label in the ``animal" cluster such as ``dog" as the replacement of the label ``cat".  
\paragraph{Convergence Performance} The comparative results are shown in Figures \ref{fig:cifar10} and \ref{fig:cifar100}, from which we can see that \textbf{(a)} SD maintains the same learning stability as that in original environment. It even surpasses the baseline OT by convergence speed in some cases, such as DenseNet-40-12 and Wresnet-28-10 on CIFAR10. \textbf{(b)} SD generally converges faster than RT, especially in the case of Wresnet-28-10. It shows SD works better assisting the classifier to execute reasonable classification decisions in noisy situations, which exhibits good robustness of the proposed algorithm. \textbf{(c)} SD with the composite loss function ``$\mathcal{L}$" shows perfect robustness across both shallow and deep networks. 
Thus, SD is expected to identify the optimal classification theoretically. 

\setlength{\tabcolsep}{4pt}
\begin{table}
\begin{center}
\begin{tabular}{c|c|c|c|c|c|c}
\hline
Method & Solver & $\alpha_{1}$ & $\alpha_{2}$ & OT & RT & SD(ours)\\
\hline
VGG8b~\cite{nokland2019training} & adam & $1$ & $0.3$ & 89.71 & \textbf{90.52} & 90.33\\
ResNet18~\cite{he2016deep} & adam & $1$ & $0.3$ & 89.22 & \textbf{90.71} & 90.32\\
DenseNet-40-12~\cite{huang2017densely} & sgd & $1$ & $0.5$ & 91.47 & 92.16 & \textbf{92.25}\\
wresnet-28-10\cite{zagoruyko2016wide} & sgd & $1$ & $0.5$ & 89.07 & 87.67 & \textbf{89.21}\\
ResNext\cite{xie2017aggregated} & sgd & $1$ & $0.3$ & 91.29 & 92.17 & \textbf{92.53}\\
\hline
\end{tabular}
\end{center}
\vspace{-2mm}
\caption{Classification on CIFAR10 with noisy labels.}
\label{table:cifar10n}
\end{table}
\setlength{\tabcolsep}{1.4pt}

\setlength{\tabcolsep}{4pt}
\begin{table}
\begin{center}
\begin{tabular}{c|c|c|c|c|c|c|c}
\hline
Method & Solver & $\alpha_{1}$ & $\alpha_{2}$ & OT & RT & SD\_v1 & SD\_v2\\
\hline
VGG8b~\cite{nokland2019training} & adam & $1$ & $0.5$ & 67.68 & 68.72 & 68.89 & \textbf{68.95}\\
ResNext~\cite{he2016deep} & sgd & $1$ & $0.5$ & 75.48 & 74.51 & 75.03 & \textbf{75.65}\\
DenseNet-40-12~\cite{huang2017densely} & sgd & $1$ & $0.5$ & 70.25 & \textbf{72.80} & 72.61 & 72.09\\
wresnet-28-10~\cite{zagoruyko2016wide} & sgd & $1$ & $0.5$ & 71.42 & 71.79 & 71.60 & \textbf{72.59}\\
ResNet101\cite{he2016deep} & sgd & $1$ & $0.5$ & 68.93 & 67.97 & 68.71 & \textbf{69.75}\\
\hline
\end{tabular}
\end{center}
\vspace{-2mm}
\caption{Classification on CIFAR100 with noisy labels.}
\label{table:cifar100n}
\end{table}
\setlength{\tabcolsep}{1.4pt}

\paragraph{Classification Accuracy} The comparative results are shown in Tables~\ref{table:cifar10n} and~\ref{table:cifar100n}. We can observe that 
\textbf{(a)} SD, in general, surpasses OT by 1-2\%. \textbf{(b)} Although RT surpasses SD in some cases, their results are very close. SD is always consistent for all the compared models. \textbf{(c)} RT is less robust than SD for its poor performance in some cases with much lower accuracy than OT, such as Wresnet-28-10 on CIFAR10, and ResNext and ResNet101 on CIFAR100. 

These observations indicate that the proposed deduction learning by semantic clustering not only enhances the classification performance but also improves the generalization for a given classifier. 
From the above experiments, it is evident that the proposed semantic clustering method can help the model achieve more accurate classification decisions. Although the semantic prior-based opposite label search provides rough information, it can aid the model to deduce high-level semantic expression along with the entire learning process, realizing the experience accumulation and basic cognitive learning. Thus, it could be an excellent plug-in module that could be applied in other supervised learning, few-shot learning, zero-shot learning, or even semi-supervised learning where each learning stage could be a better fit, generalized, and becoming much more robust. In the meanwhile, from the perspective of calculation, the proposed mechanism of deduction learning by the opposite semantic constraint only introduces one more loss item, which is only the tenth level of the order of magnitudes. Compared with matrix multiplication of any two layers during the training process which has the million level of the order of magnitudes, our proposed model is capable of keeping the time complexity of calculation, while its superior stability and robustness make it easy to be generalized to other computer vision tasks.

\section{Conclusion}
In this paper, we have proposed a deduction learning approach to boost the gain of high-level semantic clustering. We have demonstrated that if a classifier can perform further independent mapping in the semantic space, it will help the model achieve higher classification performance with better generalization ability and robustness. The proposed smooth semantic clustering algorithm ensures label learning and semantic deduction being processed in the same timeline so as to form a basic cognition. Extensive experiments across various classifiers on different datasets demonstrate the superiority of the proposed method toward further enhancing state-of-the-art classification performance.

\section*{Acknowledgement}
The work was supported in part by The National Aeronautics and Space Administration (NASA) under the grant no. 80NSSC20M0160, and the Nvidia GPU grant.





\bibliography{main}
\end{document}